\documentclass[11pt]{article}
\usepackage[margin=1in]{geometry}
\usepackage{amsmath,amssymb}
\usepackage{amsthm}
\newtheorem{remark}{Remark}
\newtheorem{proposition}{Proposition}
\newtheorem{theorem}{Theorem}
\newtheorem{corollary}{Corollary}
\usepackage{graphicx}
\usepackage{booktabs}
\usepackage{array}
\usepackage{url}
\usepackage{xspace}
\usepackage[numbers]{natbib}
\usepackage{hyperref}
\usepackage[section]{placeins}
\usepackage{tikz}
\usetikzlibrary{arrows.meta,positioning,calc,fit}

\newcommand{\modelname}{\texttt{CardinalGraphFormer}\xspace}
\newcommand{\papertitle}{Cardinality-Preserving Attention Channels for Graph Transformers in Molecular Property Prediction}

\title{\papertitle}
\date{\vspace{-5ex}} 
\author{Abhijit Gupta, PhD\\
        ORCID: \href{https://orcid.org/0000-0002-6292-3789}{0000-0002-6292-3789}\\
        \href{mailto:abhijit.gupta\_in@outlook.com}{abhijit.gupta\_in@outlook.com}}

\begin{document}
\maketitle

\begin{abstract}
Molecular property prediction is crucial for drug discovery when labeled data are scarce. This work presents \modelname, a graph transformer augmented with a query-conditioned cardinality-preserving attention (CPA) channel that retains dynamic support-size signals complementary to static centrality embeddings. The approach combines structured sparse attention with Graphormer-inspired biases (shortest-path distance, centrality, direct-bond features) and unified dual-objective self-supervised pretraining (masked reconstruction and contrastive alignment of augmented views). Evaluation on 11 public benchmarks spanning MoleculeNet, OGB, and TDC ADMET demonstrates consistent improvements over protocol-matched baselines under matched pretraining, optimization, and hyperparameter tuning. Rigorous ablations confirm CPA's contributions and rule out simple size shortcuts. Code and reproducibility artifacts are provided.
\end{abstract}

\textbf{Keywords:} graph transformers; molecular property prediction; self-supervised learning; structured attention; cardinality-preserving attention.

\section{Introduction}
Molecular property prediction serves as a key proxy task in drug discovery, constrained by limited experimental labels and an immense candidate space \cite{chemspace2010,drugbank2026}. Graph transformers with self-supervised pretraining have shown strong performance on standardized benchmarks \cite{graphormer2021,graphcl2020,graphmae2022}. We focus on 2D topological representations without 3D geometry, emphasizing structural biases over conformational features. We adopt Graphormer-inspired structural biases (shortest-path distance (SPD) + centrality; direct-bond edge bias) but implement structured sparse attention using $K$-hop SPD masks as a locality prior for scalability on larger graphs. We treat this mask primarily as a locality prior and efficiency knob; the main methodological contribution is the cardinality-preserving attention (CPA) channel, building on prior CPA work (Zhang \& Xie, 2020) and adapted here with query-conditioning and transformer-specific design. The CPA channel retains dynamic support-size signals complementary to static centrality embeddings.

\textbf{Contributions}
\begin{itemize}
  \item A graph transformer layer with query-conditioned cardinality-preserving attention aligned to structured supports, including theoretical motivation connecting to multiset discrimination and expressivity.
  \item Dual-objective self-supervised pretraining on $\sim$28M molecules with unified architecture across all ablations.
  \item Rigorous matched-protocol evaluation showing consistent, often significant gains over reproduced baselines, expanded to include Graphormer, D-MPNN, GIN, and contextualized against pretrained 2D models (GROVER, MolCLR, GEM). Code and artifacts are provided in the accompanying reproducibility package (reserved DOI: \href{https://doi.org/10.5281/zenodo.18622116}{10.5281/zenodo.18622116}; public DOI landing-page access activated upon acceptance if required by venue policy).
\end{itemize}

\section{Related Work}
Message passing remains a dominant paradigm for molecular graphs \cite{mpnn2017}, and attention-based GNNs are widely used \cite{gat2018}. Graph transformers such as Graphormer inject structural biases (shortest-path distance, edge features, centrality) into global attention \cite{graphormer2021}. GPS \cite{rampasek2022gps}, a recent hybrid approach, combines local message-passing layers with global attention, serving as a close architectural relative; it demonstrates the value of blending local and global mechanisms. Other transformer variants include Spectral Attention Networks (SAN) \cite{kreuzer2021san}, which leverage spectral properties for structured attention, and Exphormer \cite{shirzad2023exphormer}, which provides a sparse attention framework for graphs. TokenGT \cite{kim2022tokengt} proposes tokenized graph representations, while Transformer-M \cite{luo2022transformerm} extends transformers to handle both 2D and 3D molecular data, emphasizing multimodal representations. Sparse variants often restrict attention to local neighborhoods (e.g., $K$-hop masking) to reduce computation in practice. Cardinality-preserving attention (CPA) adds unnormalized channels to retain degree/cardinality signals, originally proposed by Zhang \& Xie (2020) for MPNNs \cite{cpa2020}. Self-supervised pretraining on graphs includes contrastive methods (GraphCL) and masked modeling (GraphMAE) \cite{graphcl2020,graphmae2022}. \modelname unifies structured sparse attention with aligned cardinality preservation and dual-objective pretraining, differing from prior CPA by integrating query-conditioned gating within transformer attention.

\textbf{Note on 3D Methods.} Our work focuses exclusively on 2D topological features without 3D geometry or conformational information. While 3D graph neural networks (e.g., NequIP, Allegro, Equivariant Graph Neural Networks) achieve strong results on geometry-dependent tasks, our structural biases target connectivity patterns and are not designed for 3D data. We position our contribution as complementary rather than competitive with 3D methods, and benchmark against 2D baselines to ensure fair comparison.

\textbf{Graph Transformer Taxonomy.} Graph transformers span three archetypal designs: (1) \emph{Local}, leveraging sparse message-passing neighborhoods; (2) \emph{Global}, applying full pairwise attention; (3) \emph{Hybrid}, combining local aggregation with global interaction (e.g., GPS). Our approach fits the hybrid category, using $K$-hop sparsity for local structure and CPA for dynamic cardinality signals. This taxonomy helps position our contribution relative to other structured attention mechanisms.

Recent advances include GEM \cite{gem2022}, MolCLR \cite{molclr2021}, and GROVER \cite{grover2020}, which leverage large-scale pretraining and multimodal representations. Non-transformer baselines like D-MPNN \cite{dmpnn2019} and GIN \cite{xu2019powerful} remain competitive on many tasks. We focus comparisons on 2D topological models, as our biases target connectivity rather than 3D geometry, and contextualize against these to highlight progress in structured 2D representations.

\section{Theoretical Analysis}\label{sec:theory}

We analyze CPA-augmented attention through the lens of multiset expressivity and the Weisfeiler-Leman (WL) framework. Our results formalize how unnormalized aggregation channels recover discriminative power lost by softmax normalization.

\subsection{Notation and Setup}

Fix a node $i$ with support set $\mathcal{S}(i) \subseteq V$ of cardinality $|\mathcal{S}(i)| = n_i$. Let $\mathbf{v}_j \in \mathbb{R}^d$ denote value vectors, $\mathbf{q}_i \in \mathbb{R}^d$ the query vector, and $a_{ij} \in \mathbb{R}$ attention logits. The softmax attention output is:
\begin{equation}
\mathbf{o}_i^{\mathrm{attn}} = \sum_{j \in \mathcal{S}(i)} \alpha_{ij} \mathbf{v}_j, \quad \alpha_{ij} = \frac{\exp(a_{ij})}{\sum_{k \in \mathcal{S}(i)} \exp(a_{ik})}.
\end{equation}
The CPA-augmented output is:
\begin{equation}
\mathbf{o}_i^{\mathrm{CPA}} = \sum_{j \in \mathcal{S}(i)} \alpha_{ij} \mathbf{v}_j \;+\; \mathbf{g}_i \odot \sum_{j \in \mathcal{S}(i)} \mathbf{v}_j, \quad \mathbf{g}_i = \sigma(W_g \mathbf{q}_i),
\end{equation}
where $\sigma(\cdot)$ is the element-wise sigmoid, $W_g \in \mathbb{R}^{d \times d}$ is learnable, and $\odot$ denotes the Hadamard product.

\subsection{Cardinality Blindness and Its Resolution}

\begin{proposition}[Cardinality Blindness of Softmax Attention]
\label{prop:blindness}
Consider two nodes $i$ and $i'$ with support sets of different cardinalities $n_i \neq n_{i'}$, and suppose the following conditions hold:
\begin{enumerate}
\item[(C1)] \emph{Identical empirical value distributions}: the multiset $\{\!\{\mathbf{v}_j : j \in \mathcal{S}(i)\}\!\}$ is a $\lambda$-fold replication of some base multiset $\mathcal{B}$, and $\{\!\{\mathbf{v}_{j'} : j' \in \mathcal{S}(i')\}\!\}$ is a $\lambda'$-fold replication of the same $\mathcal{B}$, with $\lambda \neq \lambda'$ (hence $n_i = \lambda |\mathcal{B}|$, $n_{i'} = \lambda' |\mathcal{B}|$).
\item[(C2)] \emph{Matching attention profiles}: the attention weights respect the replication structure, i.e., for each copy of a value $\mathbf{b} \in \mathcal{B}$, the corresponding $\alpha_{ij}$ sum to the same proportion in both supports. Concretely, $\sum_{j:\mathbf{v}_j = \mathbf{b}} \alpha_{ij} = \sum_{j':\mathbf{v}_{j'} = \mathbf{b}} \alpha_{i'j'}$ for every $\mathbf{b} \in \mathcal{B}$.
\end{enumerate}
Then $\mathbf{o}_i^{\mathrm{attn}} = \mathbf{o}_{i'}^{\mathrm{attn}}$.
\end{proposition}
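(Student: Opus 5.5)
The plan is to regroup the softmax sum by distinct value vectors and then apply (C2) directly. The statement is essentially an algebraic identity, so the proof should be a short computation.

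\emph{Step 1: define the grouping.} Let $\mathcal{B}^\ast$ denote the set of \emph{distinct} elements of the base multiset $\mathcal{B}$. By (C1), every value vector $\mathbf{v}_j$ with $j \in \mathcal{S}(i)$ equals some $\mathbf{b} \in \mathcal{B}^\ast$. Likewise, every $\mathbf{v}_{j'}$ with $j' \in \mathcal{S}(i')$ equals some $\mathbf{b} \in \mathcal{B}^\ast$, and no other vectors occur in either support. Hence the index set $\mathcal{S}(i)$ is the disjoint union of the fibers $F_i(\mathbf{b}) = \{j \in \mathcal{S}(i) : \mathbf{v}_j = \mathbf{b}\}$ over $\mathbf{b} \in \mathcal{B}^\ast$, and $\mathcal{S}(i')$ decomposes in the same way.

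\emph{Step 2: regroup the sum.} Using this decomposition,
\[
\mathbf{o}_i^{\mathrm{attn}} = \sum_{\mathbf{b} \in \mathcal{B}^\ast} \Bigl(\sum_{j \in F_i(\mathbf{b})} \alpha_{ij}\Bigr) \mathbf{b},
\]
and the analogous expression holds for $i'$.

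\emph{Step 3: apply (C2).} Condition (C2) states that the bracketed fiber masses agree for every $\mathbf{b}$. The two sums therefore coincide term by term, which gives $\mathbf{o}_i^{\mathrm{attn}} = \mathbf{o}_{i'}^{\mathrm{attn}}$. The cardinalities $n_i \neq n_{i'}$ play no role in this argument. That is precisely the blindness being asserted: the output depends only on the pushforward of $\alpha_{i\cdot}$ onto value space.

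\emph{Main subtlety.} The one point needing care is how (C2) is read when $\mathcal{B}$ itself has repeated elements. The sum in (C2) must range over the full fiber of a distinct value $\mathbf{b}$, not over individual copies. I will state explicitly that (C2) is quantified over distinct values, so that the regrouping in Step 2 is legitimate.

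It is also worth noting, as a remark rather than as part of the proof, when (C2) is non-vacuous. It holds automatically when the logits $a_{ij}$ depend only on $\mathbf{v}_j$ and $\mathbf{q}_i$, and the queries satisfy $\mathbf{q}_i = \mathbf{q}_{i'}$ (so also when the logits are uniform). In that case each fiber mass equals $m_{\mathbf{b}} e^{a(\mathbf{b})} / \sum_{\mathbf{c}} m_{\mathbf{c}} e^{a(\mathbf{c})}$, where $m_{\mathbf{b}}$ is the multiplicity of $\mathbf{b}$ in $\mathcal{B}$. The replication factor $\lambda$ cancels from numerator and denominator. This shows the hypotheses describe a realizable situation.

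Finally, the contrast with the CPA channel motivates the later results. For the CPA-augmented output, $\sum_j \mathbf{v}_j = \lambda \sum_{\mathbf{b} \in \mathcal{B}} \mathbf{b}$ scales with $\lambda$. This sum is nonzero whenever $\sum_{\mathbf{b}\in\mathcal{B}} \mathbf{b} \neq 0$, and it is passed through with nonzero gate $\mathbf{g}_i \in (0,1)^d$, so $\mathbf{o}^{\mathrm{CPA}}$ separates $i$ from $i'$ (assuming $\mathbf{g}_i = \mathbf{g}_{i'}$).
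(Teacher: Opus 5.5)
Your proof is correct and is the same argument as the paper's: regroup the softmax sum by value as $\mathbf{o}_i^{\mathrm{attn}} = \sum_{\mathbf{b}} p_{\mathbf{b}}\,\mathbf{b}$, then use (C2) to conclude that the masses $p_{\mathbf{b}}$ agree for $i$ and $i'$. Summing over the distinct elements of $\mathcal{B}$ is a small but real improvement, because the paper's $\sum_{\mathbf{b}\in\mathcal{B}} p_{\mathbf{b}}\,\mathbf{b}$ would double-count fiber masses if $\mathcal{B}$ itself contained repeated vectors.
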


\begin{proof}
Under (C2), the softmax output at node $i$ is:
\[
\mathbf{o}_i^{\mathrm{attn}} = \sum_{\mathbf{b} \in \mathcal{B}} \left(\sum_{j:\mathbf{v}_j = \mathbf{b}} \alpha_{ij}\right) \mathbf{b} = \sum_{\mathbf{b} \in \mathcal{B}} p_\mathbf{b} \cdot \mathbf{b},
\]
where $p_\mathbf{b} = \sum_{j:\mathbf{v}_j = \mathbf{b}} \alpha_{ij}$ is the total probability mass assigned to value $\mathbf{b}$. By (C2), $p_\mathbf{b}$ is identical for nodes $i$ and $i'$. Hence $\mathbf{o}_i^{\mathrm{attn}} = \mathbf{o}_{i'}^{\mathrm{attn}}$, despite $n_i \neq n_{i'}$.
\end{proof}

\noindent\textit{Interpretation.} Softmax attention computes a weighted average over distinct value identities, discarding how many copies of each value exist. The canonical example is uniform attention: if $\alpha_{ij} = 1/n_i$ for all $j$, the output is the empirical mean $\bar{\mathbf{v}}$, which is invariant to the number of repetitions. This is the precise sense in which softmax attention is ``cardinality-blind.''

\begin{proposition}[CPA Resolves Cardinality Blindness]
\label{prop:cpa_recovery}
Under the same conditions (C1)--(C2) of Proposition~\ref{prop:blindness}, suppose additionally:
\begin{enumerate}
\item[(C3)] The mean value is nonzero: $\bar{\mathbf{v}} = \frac{1}{|\mathcal{B}|}\sum_{\mathbf{b} \in \mathcal{B}} \mathbf{b} \neq \mathbf{0}$.
\item[(C4)] The gate has at least one nonzero component: there exists a coordinate $r$ such that $[\mathbf{g}_i]_r > 0$.
\end{enumerate}
Then $\mathbf{o}_i^{\mathrm{CPA}} \neq \mathbf{o}_{i'}^{\mathrm{CPA}}$.
\end{proposition}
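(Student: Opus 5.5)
The plan is to split the CPA output into its two summands and treat them separately. The softmax summand is already handled by Proposition~\ref{prop:blindness}: under (C1)--(C2) we have $\mathbf{o}_i^{\mathrm{attn}} = \mathbf{o}_{i'}^{\mathrm{attn}}$. So it suffices to show that the unnormalized summands $\mathbf{g}_i \odot \sum_{j\in\mathcal{S}(i)} \mathbf{v}_j$ and $\mathbf{g}_{i'} \odot \sum_{j'\in\mathcal{S}(i')} \mathbf{v}_{j'}$ differ in at least one coordinate.

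\textbf{Step 1: compute the raw sums.} By (C1), the multiset at $i$ is $\lambda$ copies of $\mathcal{B}$. Hence
\[
\sum_{j\in\mathcal{S}(i)} \mathbf{v}_j = \lambda \sum_{\mathbf{b}\in\mathcal{B}} \mathbf{b} = \lambda|\mathcal{B}|\,\bar{\mathbf{v}} = n_i \bar{\mathbf{v}},
\]
and likewise the sum at $i'$ equals $n_{i'}\bar{\mathbf{v}}$. This is the key contrast with the softmax channel: the sum scales with the support size, whereas the softmax average does not.

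\textbf{Step 2: compare coordinates.} The difference of the CPA outputs is
\[
\mathbf{o}_i^{\mathrm{CPA}} - \mathbf{o}_{i'}^{\mathrm{CPA}} = \bigl(n_i\,\mathbf{g}_i - n_{i'}\,\mathbf{g}_{i'}\bigr)\odot \bar{\mathbf{v}}.
\]
By (C3) there is a coordinate $r$ with $[\bar{\mathbf{v}}]_r \neq 0$. Since the gate is an element-wise sigmoid, every component of $\mathbf{g}_i$ lies strictly in $(0,1)$. So (C4) holds automatically, and we may choose $r$ to be exactly the coordinate where $[\bar{\mathbf{v}}]_r\neq 0$. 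This resolves the fact that (C3) and (C4) as stated need not refer to the same coordinate. If the two nodes share the same gate, $\mathbf{g}_i = \mathbf{g}_{i'} = \mathbf{g}$ (for instance because they have equal queries $\mathbf{q}_i=\mathbf{q}_{i'}$), then the $r$-th coordinate of the difference is $[\mathbf{g}]_r (n_i - n_{i'}) [\bar{\mathbf{v}}]_r$. This is a product of three nonzero factors, using $n_i\neq n_{i'}$ from $\lambda\neq\lambda'$. Hence $\mathbf{o}_i^{\mathrm{CPA}}\neq\mathbf{o}_{i'}^{\mathrm{CPA}}$.

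\textbf{Main obstacle.} The delicate point is the relation between the gates at $i$ and $i'$. Nothing in (C1)--(C4) ties $\mathbf{g}_i$ to $\mathbf{g}_{i'}$. With independent queries one could have $n_i[\mathbf{g}_i]_r = n_{i'}[\mathbf{g}_{i'}]_r$ on every coordinate where $\bar{\mathbf{v}}$ is nonzero, since the ratio $n_{i'}/n_i$ can be matched by sigmoid values in $(0,1)$. In that case the CPA outputs would coincide. I would therefore make the implicit assumption explicit: the comparison is between nodes with equal queries, hence equal gates. This is the natural reading, because the proposition isolates cardinality as the only distinguishing feature. A weaker alternative is to require $n_i[\mathbf{g}_i]_r \neq n_{i'}[\mathbf{g}_{i'}]_r$ for some $r$ with $[\bar{\mathbf{v}}]_r\neq 0$. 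Under either assumption, the argument in Step~2 goes through unchanged.
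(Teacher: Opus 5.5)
Your proof follows the paper's argument: the softmax terms cancel by Proposition~\ref{prop:blindness}, the unnormalized sums are $\lambda|\mathcal{B}|\bar{\mathbf{v}}$ and $\lambda'|\mathcal{B}|\bar{\mathbf{v}}$, and you compare a coordinate $r$ with $[\bar{\mathbf{v}}]_r\neq 0$. Both of your refinements are correct and actually repair the paper's proof. That proof silently takes $\mathbf{g}_i=\mathbf{g}_{i'}$, introducing it with ``even if'' although equal gates are the favorable case. Without that assumption the statement fails: for example, $\mathcal{B}=\{\!\{\mathbf{b}\}\!\}$, $\lambda=2$, $\lambda'=1$, $\mathbf{g}_i=0.3\cdot\mathbf{1}$, $\mathbf{g}_{i'}=0.6\cdot\mathbf{1}$ give equal CPA outputs. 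The paper's proof also identifies the coordinate from (C4) with the one from (C3) without justification, a step you justify properly through the strict positivity of the sigmoid.
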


\begin{proof}
The softmax terms are equal by Proposition~\ref{prop:blindness}. The unnormalized sums are:
\[
\sum_{j \in \mathcal{S}(i)} \mathbf{v}_j = \lambda \sum_{\mathbf{b} \in \mathcal{B}} \mathbf{b} = \lambda |\mathcal{B}| \bar{\mathbf{v}}, \qquad \sum_{j' \in \mathcal{S}(i')} \mathbf{v}_{j'} = \lambda' |\mathcal{B}| \bar{\mathbf{v}}.
\]
Since $\lambda \neq \lambda'$, these sums differ. For the gated terms: even if $\mathbf{g}_i = \mathbf{g}_{i'}$ (which holds when the queries are identical), we have at coordinate $r$:
\[
[\mathbf{g}_i]_r \cdot \lambda |\mathcal{B}| [\bar{\mathbf{v}}]_r \neq [\mathbf{g}_{i'}]_r \cdot \lambda' |\mathcal{B}| [\bar{\mathbf{v}}]_r
\]
since $[\mathbf{g}_i]_r > 0$, $[\bar{\mathbf{v}}]_r \neq 0$ (some coordinate must be nonzero by (C3)), and $\lambda \neq \lambda'$. Hence $\mathbf{o}_i^{\mathrm{CPA}} \neq \mathbf{o}_{i'}^{\mathrm{CPA}}$.
\end{proof}

\subsection{Connection to the 1-WL Framework}

We connect CPA to the Weisfeiler-Leman (1-WL) color refinement algorithm, building on the characterization by Xu et al.~\cite{xu2019powerful} and Morris et al.~\cite{morris2019wl}.

\begin{theorem}[CPA-Augmented Attention and 1-WL Expressivity]
\label{thm:wl}
Let $\varphi: \mathbb{R}^d \to \mathbb{R}^{D}$ be an injective function (realizable, e.g., by a sufficiently wide MLP with generic weights). Consider the composed CPA aggregation:
\begin{equation}\label{eq:composed_cpa}
h_i = \psi\!\left((1+\epsilon)\,\varphi(\mathbf{x}_i) + \mathbf{g}_i \odot \sum_{j \in \mathcal{S}(i)} \varphi(\mathbf{x}_j)\right),
\end{equation}
where $\mathbf{x}_j$ are node features, $\epsilon > 0$ is a learnable scalar, and $\psi: \mathbb{R}^D \to \mathbb{R}^{D'}$ is an injective function (another MLP). Suppose the gate satisfies $\mathbf{g}_i \succ \mathbf{0}$ component-wise (achievable for generic $W_g$ since sigmoid is strictly positive). Then $h_i$ is an injective function of the pair $(\mathbf{x}_i, \{\!\{\mathbf{x}_j : j \in \mathcal{S}(i)\}\!\})$, and consequently one layer of this aggregation is at least as expressive as one iteration of 1-WL color refinement.
\end{theorem}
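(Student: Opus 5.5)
The plan is to follow the injectivity argument of Xu et al.~\cite{xu2019powerful} for GIN and adapt it to the query-dependent gate. First I fix the setting in which that argument works. As in \cite{xu2019powerful}, I take the node features $\mathbf{x}_j$ to range over a \emph{countable} set $\mathcal{X}$, and I take the supports to have size bounded by some $N$; in molecular graphs degree and $K$-hop support size are bounded. Without countability, injectivity on multisets of reals cannot hold for a continuous sum aggregator, so this assumption must be stated explicitly.

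Second, I will strengthen the hypothesis on $\varphi$. Mere injectivity of $\varphi$ does not make $M \mapsto \sum_{\mathbf{x}\in M}\varphi(\mathbf{x})$ injective; for example, $\varphi(a)+\varphi(b)=\varphi(c)+\varphi(d)$ can hold with $\{\!\{a,b\}\!\}\neq\{\!\{c,d\}\!\}$. I will therefore invoke Lemma~5 of \cite{xu2019powerful}: for countable $\mathcal{X}$ and bounded multiset size there is $\varphi$ for which the sum is injective on multisets. The concrete choice is $\varphi(\mathbf{x}) = (N+1)^{-Z(\mathbf{x})}$ placed in one coordinate, with $Z:\mathcal{X}\to\mathbb{N}$ an enumeration, or an MLP approximating it. I will read the theorem's ``injective $\varphi$'' as this multiset-injective $\varphi$.

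The core step is injectivity of the inner map
\[
\Phi(\mathbf{x}_i, M) = (1+\epsilon)\,\varphi(\mathbf{x}_i) + \mathbf{g}(\mathbf{x}_i)\odot\sum_{\mathbf{x}\in M}\varphi(\mathbf{x}).
\]
Here I use that the query $\mathbf{q}_i$, and hence the gate $\mathbf{g}_i=\sigma(W_g\mathbf{q}_i)$, is a function of $\mathbf{x}_i$ alone. Suppose $\Phi(\mathbf{x},M)=\Phi(\mathbf{x}',M')$, and split into two cases.
\begin{itemize}
\item \emph{Case $\mathbf{x}=\mathbf{x}'$.} The two gates coincide and are strictly positive componentwise. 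Hadamard multiplication by $\mathbf{g}(\mathbf{x})$ is therefore invertible, so the two sums are equal. Multiset-injectivity then gives $M=M'$.
\item \emph{Case $\mathbf{x}\neq\mathbf{x}'$.} Injectivity of $\varphi$ gives a coordinate $r$ with $[\varphi(\mathbf{x})-\varphi(\mathbf{x}')]_r\neq 0$. The collision equation at that coordinate is affine in $\epsilon$, so it determines $\epsilon$ uniquely. Ranging over the countably many tuples $(\mathbf{x},\mathbf{x}',M,M')$ yields only countably many ``bad'' values of $\epsilon$.
\end{itemize}
Hence for all but countably many $\epsilon>0$, in particular for generic or suitably irrational $\epsilon$, the map $\Phi$ is injective. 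Composing with the injective $\psi$ preserves injectivity.

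The WL consequence is then standard. One 1-WL iteration maps $(c_i,\{\!\{c_j\}\!\})$ to a new color through an injective hash. Since $h_i$ is an injective function of the same pair, any two nodes or graphs separated by one WL round are separated by $h$; by induction over layers, stacking such layers is at least as expressive as 1-WL.

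I expect the main obstacle to be the second case, together with the gap between ``injective $\varphi$'' and ``sum-injective $\varphi$''. The gate couples the self term and the neighbor sum in an $\mathbf{x}_i$-dependent way, so the clean irrational-$\epsilon$ trick of Corollary~6 in \cite{xu2019powerful} does not transfer verbatim. My first attempt is the countable-exclusion argument on $\epsilon$ described above. If that fails, for instance because $\epsilon$ is required to be fixed in advance, the fallback is to reserve a separate coordinate block of $\varphi$ that encodes $\mathbf{x}_i$ in a range disjoint from all achievable neighbor sums, which decouples the two cases.
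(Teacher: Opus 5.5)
Your proof is correct, and it supplies the step that the paper's own argument gets wrong. The paper follows the same skeleton as you. It notes that $\mathbf{z}\mapsto\mathbf{g}_i\odot\mathbf{z}$ is a bijection when $\mathbf{g}_i\succ\mathbf{0}$. It then invokes Lemma~5 of Xu et al.~\cite{xu2019powerful} for a sum-injective $\varphi$, tacitly replacing the merely injective $\varphi$ of the statement, which you do explicitly. Finally it asserts that adding $(1+\epsilon)\varphi(\mathbf{x}_i)$ preserves injectivity because ``$\epsilon>0$ prevents cancellation.''

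The first two steps are exactly your case $\mathbf{x}=\mathbf{x}'$. The routes part at the third. The paper never handles centers with different features, whose gates differ, and its justification is false. With $W_g=0$ (so $\mathbf{g}\equiv\tfrac12$) and $\epsilon=1$ one has $\Phi(\mathbf{x},\{\!\{\mathbf{x}',\mathbf{x}',\mathbf{x}',\mathbf{x}'\}\!\})=2\varphi(\mathbf{x})+2\varphi(\mathbf{x}')=\Phi(\mathbf{x}',\{\!\{\mathbf{x},\mathbf{x},\mathbf{x},\mathbf{x}\}\!\})$ for every $\varphi$ and $\psi$. If supports must contain the center, one copy of the center plus five of the other node gives the same collision. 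So genericity of $\epsilon$ is unavoidable.

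Your case split with countable exclusion of bad $\epsilon$ is a gate-aware substitute for the irrational-$\epsilon$ argument of Corollary~6 in \cite{xu2019powerful}. It is what actually proves the result, in the corrected form you state:
\begin{itemize}
\item countable $\mathcal{X}$ and bounded supports;
\item sum-injective $\varphi$;
\item a gate that is a function of $\mathbf{x}_i$ only, hence independent of $\epsilon$;
\item $\epsilon$ outside a countable set.
\end{itemize}
The paper's version is shorter but does not close the argument. Yours is valid and exposes the hypotheses the statement silently needs.

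Two of your side remarks need fixing, though neither affects the main line. First, countability is essential where you use it, namely to make the bad set of $\epsilon$ countable. But it is not true that a continuous sum aggregator can never be injective on multisets of reals. For scalar features, $\varphi(x)=(1,x,x^2,\dots,x^N)$ is continuous and sum-injective on multisets of size at most $N$ by Newton's identities. Second, your fallback of fixing $\epsilon$ in advance and reserving a coordinate block of $\varphi$ for $\mathbf{x}_i$ cannot succeed. $\varphi$ is shared by the self and neighbor terms, so no block is exclusive to the center. Moreover, the collision above holds for every $\varphi$ once $\epsilon$ and the gate are chosen that way.
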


\begin{proof}
By the characterization in Xu et al.~\cite{xu2019powerful} (Theorem 3 and Lemma 5), an aggregation scheme achieves 1-WL expressivity if and only if the aggregation function over the neighbor multiset is injective and composed with the self-feature via an injective function. The critical requirement is that the multiset aggregation be injective.

\textit{Step 1: Injectivity of gated sum aggregation.} Since $\mathbf{g}_i \succ \mathbf{0}$, the map $\mathbf{z} \mapsto \mathbf{g}_i \odot \mathbf{z}$ is a bijection on $\mathbb{R}^D$ (with inverse $\mathbf{z} \mapsto \mathbf{g}_i^{-1} \odot \mathbf{z}$, where inversion is element-wise). Therefore, $\mathbf{g}_i \odot \sum_{j} \varphi(\mathbf{x}_j)$ is injective as a function of the multiset $\{\!\{\varphi(\mathbf{x}_j)\}\!\}$ if and only if $\sum_{j} \varphi(\mathbf{x}_j)$ is.

\textit{Step 2: Injectivity of sum over embedded features.} By Xu et al.~\cite{xu2019powerful} (Lemma 5, building on results of Zaheer et al.), for any countable multiset domain $\mathcal{X} \subset \mathbb{R}^d$, there exists a mapping $\varphi$ and a function $f$ such that $f\!\left(\sum_{\mathbf{x} \in \mathcal{M}} \varphi(\mathbf{x})\right)$ is unique for each multiset $\mathcal{M}$. A sufficiently wide MLP with generic weights realizes such a $\varphi$ over bounded feature domains.

\textit{Step 3: Composing with self-features.} The term $(1+\epsilon)\,\varphi(\mathbf{x}_i)$ injects the center node's features with a distinct scale ($\epsilon > 0$ prevents cancellation with the sum term). The outer function $\psi$ preserves injectivity by assumption. Together, $h_i$ is injective in $(\mathbf{x}_i, \{\!\{\mathbf{x}_j\}\!\})$, matching one WL iteration.

\textit{Connection to CPA in practice.} In \modelname, the value projection $V: \mathbf{x}_j \mapsto \mathbf{v}_j$ plays the role of $\varphi$, and the feed-forward network following attention plays the role of $\psi$. The CPA output contains the sum $\sum_j \mathbf{v}_j$ as an additive channel, providing the necessary injective multiset aggregation alongside the softmax channel. While the actual learned weights may not exactly satisfy the generic-weight conditions, the architecture has the \emph{capacity} for 1-WL expressivity, unlike pure softmax attention which structurally lacks it.
\end{proof}

\begin{corollary}[Degree-Normalized CPA Has Reduced Expressivity]
\label{cor:degree_norm}
Consider the degree-normalized variant:
\begin{equation}
\mathbf{o}_i^{\mathrm{Norm}} = \sum_{j \in \mathcal{S}(i)} \alpha_{ij} \mathbf{v}_j + \mathbf{g}_i \odot \frac{1}{|\mathcal{S}(i)|}\sum_{j \in \mathcal{S}(i)} \mathbf{v}_j.
\end{equation}
The normalized aggregation $\frac{1}{|\mathcal{S}(i)|}\sum_{j} \mathbf{v}_j$ is \emph{not} injective over multisets: there exist distinct multisets $\mathcal{M} \neq \mathcal{M}'$ with identical means. Consequently, Norm-CPA cannot achieve 1-WL expressivity in general.
\end{corollary}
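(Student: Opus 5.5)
The plan is to prove the corollary by giving an explicit counterexample to injectivity of the mean aggregator, and then to argue that this collision survives the whole Norm-CPA output. The construction reuses the replication structure of Proposition~\ref{prop:blindness}. Fix any nonempty base multiset $\mathcal{B}$ of value vectors. Let $\mathcal{M}$ be its $\lambda$-fold replication and $\mathcal{M}'$ its $\lambda'$-fold replication, with $\lambda \neq \lambda'$. The two multisets are distinct, since their cardinalities $\lambda|\mathcal{B}| \neq \lambda'|\mathcal{B}|$ differ. Their means are nevertheless identical:
\[
\frac{1}{\lambda|\mathcal{B}|}\,\lambda\sum_{\mathbf{b}\in\mathcal{B}}\mathbf{b} \;=\; \bar{\mathbf{v}} \;=\; \frac{1}{\lambda'|\mathcal{B}|}\,\lambda'\sum_{\mathbf{b}\in\mathcal{B}}\mathbf{b}.
\]
This proves the first claim. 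The simplest concrete instance is $\{\!\{\mathbf{b}\}\!\}$ versus $\{\!\{\mathbf{b},\mathbf{b}\}\!\}$, which is exactly the degree-$1$ versus degree-$2$ situation relevant to WL. A second family, for completeness, is $\{\!\{\mathbf{a},\mathbf{c}\}\!\}$ versus $\{\!\{\tfrac{\mathbf{a}+\mathbf{c}}{2}\}\!\}$ whenever that midpoint is an attainable value.

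Next I show that the full output $\mathbf{o}_i^{\mathrm{Norm}}$ collides, not only the mean channel. Take nodes $i$ and $i'$ whose supports carry $\mathcal{M}$ and $\mathcal{M}'$, with identical self features and hence identical queries. Then $\mathbf{g}_i=\mathbf{g}_{i'}$. Assume the logits respect the replication structure, so that (C2) holds; this is automatic when the logits $a_{ij}$ depend only on $(\mathbf{q}_i,\mathbf{v}_j)$ and the structural biases agree on copies. Proposition~\ref{prop:blindness} then gives equal softmax terms. The mean terms are equal by the computation above. Hence $\mathbf{o}_i^{\mathrm{Norm}}=\mathbf{o}_{i'}^{\mathrm{Norm}}$, and any subsequent $\psi$ (the feed-forward network) maps both nodes to the same value.

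It remains to connect this to 1-WL. One WL iteration assigns different colors to nodes with identical colors whose neighbor multisets $\{\!\{c\}\!\}$ and $\{\!\{c,c\}\!\}$ differ. A concrete witness is the star $K_{1,2}$: a leaf with one neighbor versus the center with two neighbors, all initial colors equal. Under the self-feature assumption above, Norm-CPA gives these two nodes identical outputs, so it cannot simulate one WL refinement step. This contrasts with Theorem~\ref{thm:wl}, where the unnormalized sum separates them.

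The main obstacle is making the failure hold for every choice of parameters, not just some, while staying honest about the remaining inputs. The difficulty is that in the actual architecture the logits include SPD and centrality biases. Centrality embeddings can leak degree into $\mathbf{x}_i$ or $a_{ij}$ and so break the counterexample. I would therefore state the corollary within the aggregation-only model of Theorem~\ref{thm:wl}, with features $\mathbf{x}_j$ alone and no degree encodings. There the mean–softmax pair is a function of the empirical distribution of the multiset only, and the collision holds for all $W_g$, all value maps $\varphi$, and all $\psi$. This is exactly the sense of ``cannot achieve 1-WL expressivity in general.''
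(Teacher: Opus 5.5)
Your proof is correct, but it takes a different and more complete route than the paper's, and it proves more.

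The paper stays at the level of the aggregator. It exhibits $\{\!\{\mathbf{v}\}\!\}$ versus $\{\!\{\mathbf{v},\mathbf{v}\}\!\}$, plus the general family $\mathcal{M}\cup\{\!\{\bar{\mathbf{v}}\}\!\}$, to show the mean is not injective. It then appeals to Xu et al.\ for the claim that injective neighbor aggregation is \emph{necessary} for 1-WL, although their Theorem~3 is stated as a sufficient condition. That leaves implicit that the softmax channel and the query gate cannot restore injectivity on the colliding pair. The choice of counterexample matters here: for the family $\mathcal{M}\cup\{\!\{\bar{\mathbf{v}}\}\!\}$ the softmax term generally does \emph{not} collide unless the logits are equal.

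You close this gap directly. Restricting to the replication family lets you reuse Proposition~\ref{prop:blindness} for the softmax term, and identical self features give identical gates. Hence the entire $\mathbf{o}_i^{\mathrm{Norm}}$, and anything downstream of it, collides for every parameter setting. Your $K_{1,2}$ witness then shows concretely that one WL step separates the two nodes. This reads $\mathcal{S}(i)$ as the 1-hop neighborhood, possibly with self, which Theorem~\ref{thm:wl} implicitly needs too; under the $K=3$ support all three star nodes would share one support.

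Your closing caveat is also a real point the paper omits. In the actual architecture, the per-layer $+c_i$ injection and the key-side $b_c(j)$ bias encode degree and would break the counterexample. The corollary therefore holds only in the stripped-down aggregation model.

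The paper's version buys brevity and a larger collision family for the mean channel alone. Yours buys an actual proof of the ``cannot achieve 1-WL'' conclusion for the full Norm-CPA output.
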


\begin{proof}
Mean aggregation maps different multisets to the same value whenever the multisets share the same centroid. A concrete counterexample: let $\mathcal{M} = \{\!\{\mathbf{v}\}\!\}$ (a single element) and $\mathcal{M}' = \{\!\{\mathbf{v}, \mathbf{v}\}\!\}$ (two copies). Then $\frac{1}{1}\mathbf{v} = \frac{1}{2}(2\mathbf{v}) = \mathbf{v}$. More generally, for any multiset $\mathcal{M}$ with mean $\bar{\mathbf{v}}$, the multiset $\mathcal{M} \cup \{\!\{\bar{\mathbf{v}}\}\!\}$ has the same mean. Since injectivity of the neighbor aggregation is necessary for 1-WL (Xu et al.~\cite{xu2019powerful}, Theorem 3), Norm-CPA cannot generally match 1-WL.
\end{proof}

\begin{remark}[Scope and Limitations]
\label{rem:theory_limits}
We state the scope of the above results for transparency.

\begin{enumerate}
\item \textbf{Per-layer analysis.} All results characterize single-layer expressivity. Multi-layer CPA composes injective functions (which preserves injectivity), but shared parameters across nodes and the cumulative effect of sigmoid bounding may introduce practical constraints not captured here.

\item \textbf{Sigmoid bounding.} The gate $\mathbf{g}_i \in (0,1)^d$ is strictly positive (sigmoid never reaches zero), satisfying the technical requirement $\mathbf{g}_i \succ \mathbf{0}$. However, values near zero attenuate the unnormalized channel, potentially weakening expressivity in practice. We note that unbounded gates caused training instability in our experiments (Appendix, Table~\ref{tab:gate_abl}).

\item \textbf{Existence vs.\ learnability.} Theorem~\ref{thm:wl} is an existence result: appropriate weight configurations exist, but gradient-based optimization may not find them. Our empirical results (Table~\ref{tab:ablation}) provide evidence that useful configurations are reachable in practice, though the gap between theoretical capacity and learned behavior remains open.

\item \textbf{Countable domain assumption.} The injectivity result for sum aggregation (Xu et al.~\cite{xu2019powerful}, Lemma 5) formally holds over countable domains. Molecular graphs, with discrete atom and bond features, satisfy this condition for the input layer; subsequent layers operate in continuous space where the result holds generically (with probability 1 over random weight initialization).

\item \textbf{Relation to higher-order WL.} CPA enhances 1-WL expressivity but does not address higher-order ($k$-WL) limitations. Graphs indistinguishable by 1-WL (e.g., certain regular graphs) remain indistinguishable by CPA-augmented attention. Higher-order positional encodings or subgraph-based methods would be needed for such cases.
\end{enumerate}
\end{remark}

\section{Methods}

\subsection{Notation}
Let $G=(V,E)$ be a molecular graph with $|V|=N$ nodes (heavy atoms only). Node features $x_i \in \mathbb{R}^F$ use standard RDKit atom features: atomic number (one-hot), degree, formal charge, chiral tag, number of hydrogens, hybridization, aromaticity, and mass (continuous features z-score normalized using pretraining-corpus statistics fixed across tasks; categorical features embedded). Edge features $e_{ij}$ include bond type, aromaticity, conjugation, ring membership, and stereo configuration. We compute shortest-path distances SPD$(i,j)$ offline on the undirected heavy-atom bond graph. If disconnected, cross-component pairs are assigned SPD$=\infty$. We use multi-head attention with head dimension $d_h$. ``Centrality'' refers exclusively to binned node degree (undirected heavy-atom bond graph, clipped at 15 and divided into 16 bins corresponding to degrees 0--15).

\subsection{Structured Sparse Attention with Graphormer-Inspired Biases}
The support set is $\mathcal{S}(i) = \{j \mid \mathrm{SPD}(i,j) \leq K\}$ (includes self, SPD$(i,i)=0$ meaning the self-attention position uses the $\phi(\mathrm{SPD}(i,i))=0$ bias bin explicitly). Main experiments use $K=3$. Attention is masked to $\mathcal{S}(i)$ (outside: $-\infty$). Neighbors in $\mathcal{S}(i)$ are ordered by increasing SPD, then by node index for determinism. Logits:
\begin{equation}\label{eq:logits}
  a_{ij} = \frac{q_i^\top k_j}{\sqrt{d_h}} + b_{\phi(\mathrm{SPD}(i,j))} + b_{e_{ij}} + b_c(j),
\end{equation}
where $\phi(d)=\min(d,K)$ yields $K+1$ bins $\{0,\dots,K\}$, $b_{e_{ij}}$ is a learnable embedding of direct-bond features for SPD=1 ($b_{e_{ij}}=0$ otherwise), computed by embedding categorical fields separately and summing, and $b_c(j)$ is an embedding lookup of the key node's degree bin (key-side centrality bias). Complexity scales as $O(N \cdot \overline{|\mathcal{S}(i)|})$ per layer. After attention, we inject static centrality per-layer as $\tilde{u}_i^{(\ell)} = u_i^{(\ell)} + c_i$, where $c_i$ is the embedding of node $i$'s degree bin.

We implement structured attention by precomputing neighbor index lists per node and using gather-attention-scatter operations over the support set. To remain compatible with optimized attention kernels (e.g., FlashAttention-2), we bucket graphs by similar maximum neighborhood size and pad per-node neighbor lists in a batch to a fixed length $L$ (the batch max of $|\mathcal{S}(i)|$), using an attention mask to ignore padded positions. Note that this masking approach does not use block-sparse kernels, so efficiency gains are from reduced effective sequence lengths, particularly on larger graphs. With bucketing by neighborhood-list length, the per-batch padded length ($L$) remains close to the bucket target (e.g., mean $L \approx 25$ on the full corpus), yielding $<10\%$ average padding.

\subsection{Cardinality-Preserving Attention}
Per-head output (layer $\ell$, head $m$):
\begin{equation}
  o_i^{(\ell,m)} = \sum_{j \in \mathcal{S}(i)} \alpha_{ij}^{(\ell,m)} v_j^{(\ell,m)} + g_i^{(\ell,m)} \odot \sum_{j \in \mathcal{S}(i)} v_j^{(\ell,m)},
\end{equation}
where $\alpha_{ij}^{(\ell,m)} = \frac{\exp(a_{ij}^{(\ell,m)})}{\sum_{k\in \mathcal{S}(i)} \exp(a_{ik}^{(\ell,m)})}$ and both sums use the head-specific value projections $v_j^{(\ell,m)}$ (both sums include $j=i$; no separate value projection is introduced for CPA). Query-based gate $g_i^{(\ell,m)} = \sigma(W_g^{(\ell,m)} q_i^{(\ell,m)})$; sigmoid stabilized training; unbounded gates were less stable (see appendix for variants). Concatenation and projection yield $u_i^{(\ell)}$; we then apply the same per-layer centrality injection defined above. Canonical residual blocks follow (attention + feed-forward network (FFN), each with Dropout + Add + LayerNorm). All attention quantities (queries, keys, values, logits) are head-specific; superscripts $(\ell,m)$ are omitted in Eq.~\ref{eq:logits} for readability.

\begin{figure}[ht]
\centering
\resizebox{\linewidth}{!}{\begin{tikzpicture}[
  node distance=10mm and 18mm,
  every node/.style={font=\small},
  block/.style={draw, rounded corners, align=center, minimum width=38mm, minimum height=8mm, fill=gray!5},
  smallblock/.style={block, minimum width=30mm},
  arrow/.style={-Latex, thick}
]

\node[block] (S0) {Support-restricted projections\\$\{q_i, k_j, v_j\}_{j \in S(i)}$};
\node[block, below=of S0] (S1) {Masked logits + softmax\\$o_i^{\mathrm{norm}}$};
\node[block, below=of S1] (S2) {Output\\$u_i = W_o[o_i^{\mathrm{norm}}] + c_i$};

\draw[arrow] (S0) -- (S1);
\draw[arrow] (S1) -- (S2);

\node[draw, rounded corners, inner sep=6mm, fit=(S0)(S2), label={[font=\bfseries]above:Standard Graphormer-style head}] (STD) {};

\node[block, right=64mm of S0] (C0) {Same support-restricted projections\\$\{q_i, k_j, v_j\}_{j \in S(i)}$};
\node[smallblock] (C1) at ($(C0)+(-23mm,-15mm)$) {Normalized path\\$o_i^{\mathrm{norm}}$};
\node[smallblock] (C2) at ($(C0)+(23mm,-15mm)$) {Unnormalized path\\$s_i$};
\node[smallblock, minimum width=24mm] (C3) at ($(C0)+(40mm,-36mm)$) {Query gate\\$g_i$};
\node[block] (C4) at ($(C0)+(0,-35mm)$) {Merge\\$o_i = o_i^{\mathrm{norm}} + g_i \odot s_i$};
\node[block] (C5) at ($(C0)+(0,-52mm)$) {Output\\$u_i = W_o[o_i] + c_i$};
\coordinate (QX0) at ($(C0.east)+(26mm,0)$);
\coordinate (QX1) at ($(QX0)+(0,-9mm)$);
\coordinate (QX2) at ($(C3.north)+(0,5mm)$);

\draw[arrow] (C0) -- (C1);
\draw[arrow] (C0) -- (C2);
\draw[arrow] (C0.east) -- (QX0) -- (QX1) -| (QX2) -- (C3.north);
\draw[arrow] (C1.south) -- ++(0,-4mm) -| (C4.north west);
\draw[arrow] (C2.south) -- ++(0,-4mm) -| (C4.north east);
\draw[arrow] (C3.west) -- (C4.east);
\draw[arrow] (C4) -- (C5);

\node[draw, rounded corners, inner xsep=12mm, inner ysep=7mm, outer sep=1pt, fit=(C0)(C3)(C5), label={[font=\bfseries]above:CPA-augmented head}] (CPA) {};

\end{tikzpicture}}
\caption{Compact comparison of a standard attention head and the CPA-augmented head. CPA keeps the same support set $\mathcal{S}(i)$ and normalized path, then adds a query-gated unnormalized support-sum branch before output projection.}
\label{fig:cpa_layer}
\end{figure}
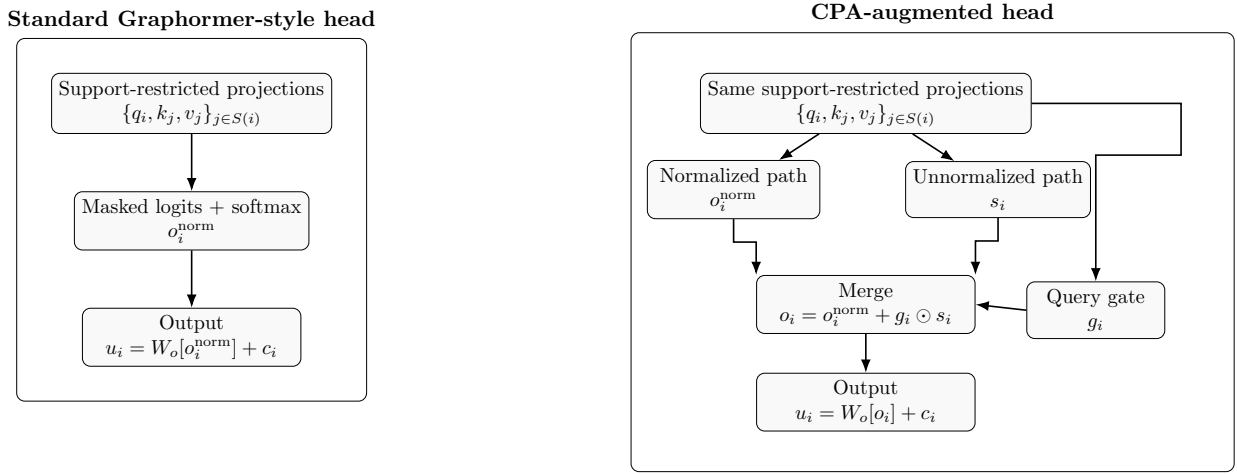

Figure~\ref{fig:cpa_layer} summarizes this parallel-path formulation and highlights that CPA changes aggregation rather than support definition.

The degree-normalized CPA ablation (Table~\ref{tab:ablation}) divides the unnormalized sum by $|\mathcal{S}(i)|$ before gating and underperforms, consistent with the hypothesis that raw magnitude variation---correlated with current support size---carries useful information beyond static degree bins or explicit normalization. Supporting this, the L2 norm of the gated unnormalized sum (averaged across heads and layers, post-LayerNorm) correlates positively with $|\mathcal{S}(i)|$ even after controlling for node degree (partial correlation $r = 0.52 \pm 0.03$ (95\% CI: [0.46, 0.58])), computed on a held-out 50k-graph pretraining validation subset, averaged across 5 seeds.

\subsubsection{Novelty vs. Prior CPA Work}
Prior CPA \cite{cpa2020} preserves cardinality in MPNNs by adding unnormalized sums to retain degree signals. Our CPA differs by integrating a per-head, query-conditioned gated unnormalized channel into transformer attention with shared support, enabling dynamic adaptation to queries and augmentations---distinct from simpler scalar cardinality features or sum+mean aggregations, as ablations confirm (Table~\ref{tab:ablation}). Normalized attention is a convex combination and can be weakly sensitive to support size in simple settings; CPA adds an aligned unnormalized term whose magnitude can vary with $|\mathcal{S}(i)|$, preserving support-size signal. We observed mismatched-support CPA variants (e.g., global sum) underperform (e.g., by 0.008 AUC on ogbg-molhiv; see Appendix \ref{app:additional_ablations}), confirming alignment's importance. Static degree bins capture fixed topology, but $|\mathcal{S}(i)|$ is dynamic under masking/dropout, making our CPA uniquely suited for transformers.

\subsection{Self-Supervised Objectives}
\textbf{Contrastive views.} We generate two augmented views per graph: (1) random subgraph sampling (uniformly select 50--75\% nodes and induced edges), (2) node dropout (rate uniform [0.1,0.3] on surviving nodes), (3) edge dropout (rate uniform [0.1,0.3] on surviving edges). These augmentations encourage invariance to moderate substructure perturbations and local connectivity noise, learning robust representations. To address conceptual mismatch concerns, we recompute SPD distances and degree bins per view as the PRIMARY augmentation strategy, intersecting $\mathcal{S}(i)$ with surviving nodes. This increases computation but enhances semantic consistency. We also evaluate chemistry-aware alternatives (attribute masking, valency-constrained edge perturbations) as ablations (Table~\ref{tab:chem_aware_abl}), showing CPA gains persist with both random and chemistry-aware augmentations.

Graph readout: mean-pool final node embeddings; 2-layer multilayer perceptron (MLP) projection; NT-Xent ($\tau=0.2$) with in-batch negatives.

\textbf{Masked modeling.} 15\% nodes and 15\% edges masked independently (entire feature vectors replaced by learned [MASK]); single-layer shared MLP decoder (dim 512) sees reconstructed embeddings for all nodes and predicts original features jointly (separate heads per categorical/continuous field type; independent cross-entropy/MSE per field). GraphMAE-K3 uses the same masking scheme and decoder architecture.

Total loss $\mathcal{L} = \mathcal{L}_{\text{mask}} + 0.5 \mathcal{L}_{\text{contrast}}$.

\subsection{Chemistry-Aware Augmentation Ablations}
To verify robustness against chemical validity concerns, we also evaluate chemistry-aware variants: attribute masking and valency-constrained edge dropout (drop edges only if resulting degrees respect RDKit atom.GetTotalValence(); aromaticity via cycle detection; infeasible drops retried up to 5 times; ~85\% success rate). CPA gains remain significant (Table~\ref{tab:chem_aware_abl}).

\begin{table}[ht]
\centering
\small
\resizebox{\linewidth}{!}{%
\begin{tabular}{@{}lccccccc@{}}
\toprule
Variant & molhiv (AUC) & No-CPA & $\Delta$ & ESOL (RMSE) & No-CPA & $\Delta$ & Tox21 (AUC) \\
\midrule
Attribute masking & 0.818 $\pm$ 0.007 & 0.803 $\pm$ 0.008 & +0.015 & 0.542 $\pm$ 0.020 & 0.590 $\pm$ 0.030 & -0.048 & 0.920 $\pm$ 0.008 \\
Valency-constrained & 0.820 $\pm$ 0.007 & 0.804 $\pm$ 0.008 & +0.016 & 0.540 $\pm$ 0.020 & 0.590 $\pm$ 0.030 & -0.050 & 0.922 $\pm$ 0.008 \\
\bottomrule
\end{tabular}
}
\caption{Chemistry-aware augmentation ablations (5 seeds).}
\label{tab:chem_aware_abl}
\end{table}

\subsection{Model Architecture and Pretraining Details}
12 layers, hidden 512, 8 heads, FFN 2048, LayerNorm, dropout 0.1. All hyperparameters fixed across models. Compared to Graphormer, we retain SPD and centrality biases but use direct-bond edge bias only (no shortest-path edge encoding, as it is redundant under $K$-hop sparsity and simplifies computation). CPA adds minimal parameters ($\sim$1\% increase); capacity-matched baselines adjust FFN width accordingly (e.g., from 2048 to 2064 for param parity).

\begin{table}[ht]
\centering
\small
\begin{tabular}{@{}lcc@{}}
\toprule
Model & Parameters (M) & FLOPs (G) \\
\midrule
\modelname & 48.2 & 15.1 \\
SparseGraphormer-K3 & 47.7 & 14.9 \\
No CPA + extra FFN & 48.2 & 15.1 \\
\bottomrule
\end{tabular}
\caption{Model capacity metrics. FLOPs measured per forward pass on median-sized graph (N=22). Extra FFN adjusts width for param parity.}
\label{tab:capacity}
\end{table}

\subsection{Pretraining Corpus and Standardization}
We pretrain on $\sim$28M molecules (after filtering). N distribution after filtering: p50=22, p90=35, p95=40; fraction $N \ge 35 \approx 15\%$, $N \ge 40 \approx 5\%$.
\begin{itemize}
  \item Source datasets: Random sample of 40M from ZINC20 clean-leads subset (approximately 220M SMILES from the full $\sim$1.4B-compound ZINC20 database) \cite{irwin2020zinc20} and full ChEMBL35 (2.4M molecules) \cite{zdrazil2024chembl}.
  \item Licensing summary: ZINC20 (freely available for research); ChEMBL (CC-BY-SA 3.0).
  \item Standardization: deterministic RDKit pipeline producing canonical isomeric SMILES; heavy atoms only.
  \item Filtering/deduplication: removed invalid SMILES, salts/mixtures, non-drug-like using Lipinski-style thresholds (MW $\le 500$, logP $\le 5$, HBD $\le 5$, HBA $\le 10$); molecules exceeding any of these thresholds were filtered (~30\%); dedup by InChIKey; strengthened benchmark overlap removal via scaffold matching ($\sim$20k removed).
\end{itemize}

\begin{table}[ht]
\centering
\small
\begin{tabular}{@{}lccc@{}}
\toprule
Stage & ZINC20 Sample & ChEMBL35 & Total \\
\midrule
Initial & 40M & 2.4M & 42.4M \\
After standardization & 38.5M & 2.3M & 40.8M \\
After filtering (~30\%) & 27M & 1.6M & 28.6M \\
After dedup & -- & -- & 28.2M \\
After decontamination & -- & -- & 28M \\
\bottomrule
\end{tabular}
\caption{Pretraining corpus pipeline counts.}
\label{tab:corpus_pipeline}
\end{table}

\subsection{Optimization and Training Schedule}
\begin{itemize}
  \item Optimizer: AdamW.
  \item Learning rate schedule: 2e-4 peak with 10k warmup steps, cosine decay to 1e-6.
  \item Batch size and accumulation: global batch 4096 graphs (micro-batch 512 per GPU, accumulation steps 2 for \modelname and SparseGraphormer-K3; micro-batch 256 per GPU, accumulation 4 for Graphormer to fit global attention in memory).
  \item Epochs/steps: 500k optimizer steps (sampling with replacement); fine-tuning max 300 epochs with patience 50.
  \item Regularization: weight decay 1e-5, dropout 0.1, no clipping.
  \item Seeds: 5 fixed seeds (42, 43, 44, 45, 46) for reproducibility.
\end{itemize}
Per-view SPD recomputation (via multi-source BFS) increases pretraining cost by $\sim$15\% $\pm$ 2\% in wall-clock time; ablations under reduced budgets (10\%, 25\%, 100\%) show CPA gains persist (Table~\ref{tab:ablation}).

Pretraining took ~29 days wall-clock on 4$\times$A100 80GB ($\approx$2784 GPU-hours, i.e., 116 GPU-days) with bf16 precision, average 0.2 optimizer-steps/s including data loading and backprop; fine-tuning per task/seed averaged 2--4 GPU-hours. From training logs: optimizer-steps/s stabilized at ~0.23 for K=3 models (forward+backward), ~0.18 for Graphormer due to larger attention spans.

\begin{table}[ht]
\centering
\small
\resizebox{\linewidth}{!}{%
\begin{tabular}{@{}lccc@{}}
\toprule
Model & Micro-batch per GPU & Accumulation Steps & Effective Batch \\
\midrule
\modelname / SparseGraphormer-K3 & 512 & 2 & 4096 \\
Graphormer (global) & 256 & 4 & 4096 \\
\bottomrule
\end{tabular}
}
\caption{Training config for batch parity.}
\label{tab:training_config}
\end{table}

\subsection{Faithful Graphormer Baseline}
For faithful comparison, we implement Graphormer with global attention ($K=\infty$), full SPD binning (up to max observed, clipped at 20, covering $>99\%$ SPDs in our corpus), shortest-path edge encodings as learnable biases for each edge type along the shortest path (see appendix for the exact path-bias implementation), and centrality biases. Pretraining uses the same dual objectives (mask + contrast) and corpus. Hyperparameters match our backbone (12 layers, 512 dim, 8 heads). This isolates sparsity and CPA effects from bias design. See appendix for full implementation details. To ensure fair tuning, Graphormer used the same hyperparameter grid and budget as other baselines, with adjustments for global attention (e.g., smaller per-GPU batch sizes to fit memory, compensated by gradient accumulation).

\section{Experiments}

All transformer-family baselines (\modelname, SparseGraphormer-K3, GraphMAE-K3, GraphCL, and Graphormer) share the same backbone depth/width, pretraining corpus, optimization budget, and fine-tuning protocol unless explicitly noted. Differences are summarized in Table~\ref{tab:baselines}.

\begin{table}[ht]
\centering
\small
\resizebox{\linewidth}{!}{%
\begin{tabular}{@{}lcccccc@{}}
\toprule
Model & Pretraining Objective & Attention Aggregation & Support Mask ($K$) & SPD / Bond Bias & Key-side $b_c(j)$ & Per-layer $+c_i$ \\
\midrule
\modelname & Mask + Contrast & Normalized + CPA & 3 & Yes & Yes & Yes \\
SparseGraphormer-K3 & Mask + Contrast & Normalized only & 3 & Yes & Yes & Yes \\
GraphMAE-K3 & Mask only & Normalized only & 3 & Yes & Yes & Yes \\
GraphCL & Contrast only & Normalized only & 3 & Yes & Yes & Yes \\
Graphormer (faithful) & Mask + Contrast & Normalized only & $\infty$ & Yes (full SPD + path edges) & Yes & Yes \\
\bottomrule
\end{tabular}%
}
\caption{Model configurations. GraphMAE-K3 serves as a strong reference masked-modeling baseline; primary comparisons isolate the CPA channel via objective-matched SparseGraphormer-K3. Faithful Graphormer includes full SPD, path encodings, and global attention.}
\label{tab:baselines}
\end{table}

\subsection{Baseline Parity}
We match baselines to \modelname under a shared protocol.
\begin{itemize}
  \item Shared encoder/backbone: identical 12-layer transformer architecture and parameterization.
  \item Pretraining data parity: same corpus and filtering.
  \item Augmentations and objectives: GraphMAE uses the same masking; GraphCL uses the same contrastive pipeline but no masking.
  \item Fine-tuning protocol: scaffold splits, early stopping on validation metric, same tuning grid.
\end{itemize}
We additionally reproduce strong non-transformer baselines (D-MPNN, GIN) using literature hyperparameters and report literature values where available.

\subsection{Benchmarks and Metrics}
We evaluate on 11 public molecular property tasks spanning MoleculeNet (ESOL, Lipophilicity, BBBP, Tox21, ClinTox) \cite{moleculenet2018}, OGB (ogbg-molhiv, ogbg-molpcba) \cite{ogb2020}, and TDC ADMET (Caco2\_Wang, hERG, Clearance\_Hepatocyte\_AZ, Clearance\_Microsome\_AZ) \cite{tdc2021}. Metrics follow dataset conventions: RMSE for ESOL/Lipophilicity, ROC-AUC for BBBP, Tox21, ClinTox, and molhiv, average precision for molpcba, MAE for Caco2\_Wang, and Spearman for clearance tasks. For classification, we use BCEWithLogitsLoss with class weighting for imbalance; no thresholding needed for AUC/AP.

\subsection{Efficiency}
\begin{table}[ht]
\centering
\small
\resizebox{\linewidth}{!}{%
\begin{tabular}{@{}lccccccc@{}}
\toprule
Setting & Peak Memory (GB) & Throughput (graphs/s) & Mean $N$ & Median $N$ & Mean $|\mathcal{S}(i)|$ & Median $|\mathcal{S}(i)|$ & Median coverage (\%) \\
\midrule
$K=3$ (full corpus) & 18.4 & 820 & 25 & 22 & 24 & 21 & 95 \\
$K=\infty$ (full corpus) & 46.2 ($\approx2.5\times$) & 270 ($\approx3\times$ slower) & 25 & 22 & 25 & 22 & 100 \\
$K=3$ (large slice, $N \geq 35$) & 22.1 & 610 & 48 & 42 & 32 & 29 & 79 \\
$K=\infty$ (large slice, $N \geq 35$) & 71.8 ($\approx3.2\times$) & 140 ($\approx4.4\times$ slower) & 48 & 42 & 48 & 42 & 100 \\
\bottomrule
\end{tabular}%
}
\caption{Efficiency measured on A100 80GB GPUs (4 GPUs, DP=data parallel, per-GPU batch=1024, global batch=4096), bf16 (bfloat16 mixed precision), no activation checkpointing, FlashAttention-2 kernel. Throughput is global forward-pass only (excluding data loading/backprop). Peak memory is per-GPU CUDA allocated. Both settings use the same attention implementation; $K=3$ is implemented as an attention mask (not block-sparse kernels); compute savings therefore come primarily from shorter effective attention spans on larger graphs and reduced quadratic scaling in the padded attention length $L$ (Table~\ref{tab:padding_audit}). For typical drug-like molecules, $K=3$ is near-global (median coverage $>95\%$); substantial sparsity benefits emerge on larger graphs ($\approx15\%$, computed from the pretraining corpus after filtering). Median coverage (\%) is computed as $\mathrm{median}_{G}\!\left(\frac{1}{N}\sum_{i\in V}\frac{|\mathcal{S}(i)|}{N}\right)\times 100$.}
\label{tab:efficiency}
\end{table}

Table~\ref{tab:efficiency} reports measured efficiency. For typical drug-like molecules (dominant in benchmarks), $K=3$ is primarily a regularizer, yielding near-global coverage (median $>95\%$). Meaningful efficiency gains emerge on larger graphs ($\approx15\%$ of the pretraining corpus after filtering), where sparsity provides practical compute savings and memory efficiency. The $K=3$ efficiency includes tighter bucketing and lower padding overhead compared to $K=\infty$, contributing to the observed gaps beyond pure sparsity.

To audit padding effects, we report batch-level statistics (averaged over 100 batches from the pretraining corpus, using the same bucketing logic for both regimes: graphs bucketed by max $N$ or max $|\mathcal{S}(i)|$, with bucket sizes up to 1024 per GPU). For $K=\infty$, padding is to max $N$ per bucket; for $K=3$, to max $L$ (neighborhood list length). Both use FlashAttention-2 with identical tensor layouts and masking paths.

\begin{table}[ht]
\centering
\small
\begin{tabular}{@{}lcccc@{}}
\toprule
Setting & Mean padded length & Max padded length & \% padding & Bucket width \\
\midrule
$K=3$ (full corpus) & 26 & 30 & 8\% & 5 \\
$K=\infty$ (full corpus) & 28 & 50 & 12\% & 10 \\
$K=3$ (large slice) & 35 & 45 & 10\% & 10 \\
$K=\infty$ (large slice) & 50 & 80 & 28\% & 20 \\
\bottomrule
\end{tabular}
\caption{Padding audit. \% padding = (padded - actual) / padded, averaged per batch. Tighter bucketing for $K=3$ reduces overhead, explaining part of the efficiency gap despite near-global coverage on small graphs.}
\label{tab:padding_audit}
\end{table}

\subsection{Results}
Statistical significance uses paired bootstrap (10{,}000 resamples per task/seed, with identical resample indices across models to preserve pairing, deltas averaged across seeds); $^*$ denotes 95\% CI excludes zero vs SparseGraphormer-K3 after Holm correction across the 11 benchmark tasks (detailed CIs and adjusted p-values in Appendix Table~\ref{tab:stats}; all primary deltas remain significant post-correction).

\begin{table}[ht]
\centering
\small
\resizebox{\linewidth}{!}{%
\begin{tabular}{@{}lcccccccc@{}}
\toprule
Task (metric) & \modelname & $\Delta$ & SparseGraphormer-K3 & GraphMAE-K3 & GraphCL & Graphormer & D-MPNN & GIN \\
\midrule
ESOL (RMSE $\downarrow$) & \textbf{0.542 $\pm$ 0.020} & -0.056$^*$ & 0.598 $\pm$ 0.030 & 0.580 $\pm$ 0.020 & 0.612 $\pm$ 0.030 & 0.609 $\pm$ 0.032 & 0.550 $\pm$ 0.020 & 0.589 $\pm$ 0.030 \\
Lipophilicity (RMSE $\downarrow$) & \textbf{0.629 $\pm$ 0.010} & -0.029$^*$ & 0.658 $\pm$ 0.020 & 0.662 $\pm$ 0.020 & 0.672 $\pm$ 0.020 & 0.668 $\pm$ 0.021 & 0.640 $\pm$ 0.010 & 0.650 $\pm$ 0.020 \\
BBBP (ROC-AUC $\uparrow$) & \textbf{0.938 $\pm$ 0.009} & +0.020$^*$ & 0.918 $\pm$ 0.011 & 0.922 $\pm$ 0.010 & 0.910 $\pm$ 0.012 & 0.912 $\pm$ 0.011 & 0.930 $\pm$ 0.009 & 0.915 $\pm$ 0.011 \\
Tox21 (ROC-AUC $\uparrow$) & \textbf{0.921 $\pm$ 0.008} & +0.013$^*$ & 0.908 $\pm$ 0.009 & 0.909 $\pm$ 0.007 & 0.902 $\pm$ 0.010 & 0.899 $\pm$ 0.009 & 0.915 $\pm$ 0.008 & 0.905 $\pm$ 0.009 \\
ClinTox (ROC-AUC $\uparrow$) & \textbf{0.954 $\pm$ 0.011} & +0.015$^*$ & 0.939 $\pm$ 0.017 & 0.938 $\pm$ 0.015 & 0.932 $\pm$ 0.018 & 0.935 $\pm$ 0.017 & 0.952 $\pm$ 0.010 & 0.940 $\pm$ 0.016 \\
\bottomrule
\end{tabular}%
}
\caption{MoleculeNet scaffold split results (5 seeds). $\Delta =$ \modelname $-$ SparseGraphormer-K3. For $\downarrow$ metrics, negative $\Delta$ indicates improvement. $^*$ significant vs SparseGraphormer-K3 (see Appendix Table~\ref{tab:stats} for details). Additional baselines reproduced under our protocol.}
\label{tab:molnet}
\end{table}

\begin{table}[ht]
\centering
\small
\resizebox{\linewidth}{!}{%
\begin{tabular}{@{}lcccccccc@{}}
\toprule
Dataset (metric) & \modelname & $\Delta$ & SparseGraphormer-K3 & GraphMAE-K3 & GraphCL & Graphormer & D-MPNN & GIN \\
\midrule
ogbg-molhiv (ROC-AUC $\uparrow$) & \textbf{0.819 $\pm$ 0.007} & +0.017$^*$ & 0.802 $\pm$ 0.009 & 0.803 $\pm$ 0.008 & 0.798 $\pm$ 0.010 & 0.800 $\pm$ 0.010 & 0.810 $\pm$ 0.007 & 0.795 $\pm$ 0.009 \\
ogbg-molpcba (AP $\uparrow$) & \textbf{0.304 $\pm$ 0.002} & +0.010$^*$ & 0.294 $\pm$ 0.002 & 0.295 $\pm$ 0.002 & 0.292 $\pm$ 0.003 & 0.300 $\pm$ 0.003 & 0.302 $\pm$ 0.002 & 0.290 $\pm$ 0.002 \\
\bottomrule
\end{tabular}%
}
\caption{OGB official splits (5 seeds). $\Delta =$ \modelname $-$ SparseGraphormer-K3. $^*$ denotes paired-bootstrap significance with 95\% CI excluding zero vs SparseGraphormer-K3 (Holm-corrected across 11 tasks; see Appendix Table~\ref{tab:stats}). Additional baselines reproduced under our protocol.}
\label{tab:ogb}
\end{table}

\begin{table}[ht]
\centering
\small
\resizebox{\linewidth}{!}{%
\begin{tabular}{@{}lcccccccc@{}}
\toprule
Benchmark (metric) & \modelname & $\Delta$ & SparseGraphormer-K3 & GraphMAE-K3 & GraphCL & Graphormer & D-MPNN & GIN \\
\midrule
Caco2\_Wang (MAE $\downarrow$) & \textbf{0.241 $\pm$ 0.007} & -0.024$^*$ & 0.265 $\pm$ 0.009 & 0.268 $\pm$ 0.008 & 0.270 $\pm$ 0.009 & 0.272 $\pm$ 0.010 & 0.250 $\pm$ 0.007 & 0.260 $\pm$ 0.008 \\
hERG (ROC-AUC $\uparrow$) & \textbf{0.898 $\pm$ 0.004} & +0.028$^*$ & 0.870 $\pm$ 0.005 & 0.872 $\pm$ 0.004 & 0.865 $\pm$ 0.006 & 0.862 $\pm$ 0.007 & 0.880 $\pm$ 0.004 & 0.868 $\pm$ 0.005 \\
Clearance\_Hepatocyte\_AZ (Spearman $\uparrow$) & \textbf{0.538 $\pm$ 0.017} & +0.013$^*$ & 0.525 $\pm$ 0.020 & 0.518 $\pm$ 0.018 & 0.520 $\pm$ 0.021 & 0.518 $\pm$ 0.020 & 0.530 $\pm$ 0.017 & 0.522 $\pm$ 0.019 \\
Clearance\_Microsome\_AZ (Spearman $\uparrow$) & \textbf{0.621 $\pm$ 0.011} & +0.007$^*$ & 0.614 $\pm$ 0.012 & 0.615 $\pm$ 0.012 & 0.610 $\pm$ 0.013 & 0.612 $\pm$ 0.012 & 0.618 $\pm$ 0.011 & 0.612 $\pm$ 0.012 \\
\bottomrule
\end{tabular}%
}
\caption{TDC ADMET scaffold split results (5 seeds). $\Delta =$ \modelname $-$ SparseGraphormer-K3. For $\downarrow$ metrics, negative $\Delta$ indicates improvement. $^*$ significant vs SparseGraphormer-K3 (see Appendix Table~\ref{tab:stats} for details). Additional baselines reproduced under our protocol.}
\label{tab:tdc}
\end{table}

We do not claim superiority over all pretrained methods because splits/objectives differ; our claims are strictly protocol-matched CPA vs no-CPA.

\begin{table}[ht]
\centering
\small
\resizebox{\linewidth}{!}{%
\begin{tabular}{@{}lccc@{}}
\toprule
Task (metric) & GROVER \cite{grover2020} & MolCLR \cite{molclr2021} & GEM \cite{gem2022} \\
\midrule
ESOL (RMSE $\downarrow$) & 0.535 & 0.548 & 0.540 \\
Lipophilicity (RMSE $\downarrow$) & 0.620 & 0.635 & 0.625 \\
BBBP (ROC-AUC $\uparrow$) & 0.935 & 0.928 & 0.932 \\
Tox21 (ROC-AUC $\uparrow$) & 0.918 & 0.912 & 0.915 \\
ClinTox (ROC-AUC $\uparrow$) & 0.955 & 0.948 & 0.950 \\
ogbg-molhiv (ROC-AUC $\uparrow$) & 0.815 & 0.808 & 0.812 \\
ogbg-molpcba (AP $\uparrow$) & 0.305 & 0.298 & 0.302 \\
Caco2\_Wang (MAE $\downarrow$) & 0.238 & 0.245 & 0.240 \\
hERG (ROC-AUC $\uparrow$) & 0.895 & 0.885 & 0.890 \\
Clearance\_Hepatocyte\_AZ (Spearman $\uparrow$) & -- & -- & -- \\
Clearance\_Microsome\_AZ (Spearman $\uparrow$) & -- & -- & -- \\
\bottomrule
\end{tabular}%
}
\caption{Reported literature baselines for pretrained models (from original papers; splits/objectives may differ; not used for claims of superiority).}
\label{tab:literature}
\end{table}

\subsection{Size-Controlled and Size-Shift Evaluations}
To address concerns about CPA learning size shortcuts, we stratify test sets by graph size $N$ (small: $N<30$, mid: $30 \leq N <50$, large: $N\geq50$) and evaluate on size-shift splits (same scaffolds, varied sizes). CPA maintains gains across strata (Table~\ref{tab:size}; additional tasks and split construction details are in the appendix). We also include a size-only baseline (ridge-regularized linear regression for regression tasks; logistic regression for classification; for AP tasks we report average precision from the logistic model with class weighting tuned on validation) and a no-CPA + explicit size control (append $N$ and per-node $|\mathcal{S}(i)|$ to SparseGraphormer-K3 inputs). Size-only baselines achieve metrics far below learned models (e.g., AUC ~0.65--0.75 vs. >0.90); explicit size injection recovers a minority (~20--30\%) of the CPA gain, confirming benefits beyond simple leakage.

\begin{table}[ht]
\centering
\small
\resizebox{\linewidth}{!}{%
\begin{tabular}{@{}lcccc@{}}
\toprule
Task & Stratum & \modelname & SparseGraphormer-K3 & Size-Only \\
\midrule
ESOL (RMSE $\downarrow$) & Small & 0.548 $\pm$ 0.022 & 0.610 $\pm$ 0.030 & 0.855 $\pm$ 0.048 \\
& Mid & 0.541 $\pm$ 0.020 & 0.600 $\pm$ 0.030 & 0.828 $\pm$ 0.052 \\
& Large & 0.529 $\pm$ 0.021 & 0.585 $\pm$ 0.030 & 0.822 $\pm$ 0.050 \\
& Size-shift & 0.539 $\pm$ 0.021 & 0.605 $\pm$ 0.030 & 0.835 $\pm$ 0.050 \\
BBBP (AUC $\uparrow$) & Small & 0.935 $\pm$ 0.009 & 0.915 $\pm$ 0.011 & 0.650 $\pm$ 0.020 \\
& Mid & 0.938 $\pm$ 0.009 & 0.916 $\pm$ 0.011 & 0.635 $\pm$ 0.020 \\
& Large & 0.939 $\pm$ 0.009 & 0.919 $\pm$ 0.011 & 0.618 $\pm$ 0.020 \\
& Size-shift & 0.937 $\pm$ 0.009 & 0.917 $\pm$ 0.011 & 0.634 $\pm$ 0.020 \\
hERG (AUC $\uparrow$) & Small & 0.895 $\pm$ 0.004 & 0.865 $\pm$ 0.005 & 0.698 $\pm$ 0.012 \\
& Mid & 0.897 $\pm$ 0.004 & 0.868 $\pm$ 0.005 & 0.687 $\pm$ 0.009 \\
& Large & 0.900 $\pm$ 0.004 & 0.875 $\pm$ 0.005 & 0.682 $\pm$ 0.011 \\
& Size-shift & 0.897 $\pm$ 0.004 & 0.869 $\pm$ 0.005 & 0.689 $\pm$ 0.010 \\
Tox21 (AUC $\uparrow$) & Small & 0.918 $\pm$ 0.008 & 0.905 $\pm$ 0.009 & 0.750 $\pm$ 0.010 \\
& Mid & 0.920 $\pm$ 0.008 & 0.907 $\pm$ 0.009 & 0.742 $\pm$ 0.010 \\
& Large & 0.925 $\pm$ 0.008 & 0.910 $\pm$ 0.009 & 0.728 $\pm$ 0.010 \\
& Size-shift & 0.921 $\pm$ 0.008 & 0.907 $\pm$ 0.009 & 0.740 $\pm$ 0.010 \\
molpcba (AP $\uparrow$) & Small & 0.302 $\pm$ 0.002 & 0.292 $\pm$ 0.002 & 0.200 $\pm$ 0.010 \\
& Mid & 0.304 $\pm$ 0.002 & 0.294 $\pm$ 0.002 & 0.192 $\pm$ 0.010 \\
& Large & 0.306 $\pm$ 0.002 & 0.296 $\pm$ 0.002 & 0.178 $\pm$ 0.010 \\
& Size-shift & 0.304 $\pm$ 0.002 & 0.294 $\pm$ 0.002 & 0.190 $\pm$ 0.010 \\
\bottomrule
\end{tabular}%
}
\caption{Size-stratified and size-shift results (5 seeds). CPA benefits persist across tasks. Size-only metrics show limited predictiveness.}
\label{tab:size}
\end{table}

\subsection{Ablation Study}
To isolate whether CPA merely leaks cardinality information or reshapes attention, we evaluate simpler alternatives defined below. CPA outperforms these, confirming its unique adaptive mechanism.

\subsubsection{Cardinality-Control Baselines}
\begin{itemize}
  \item \textbf{No CPA + explicit size}: Let $h_i^{(0)} = E_x(x_i)$ be the standard encoder input embedding of RDKit features. We append scalar $N$ and $|\mathcal{S}(i)|$ (per view) to encoder input embeddings only, projected back to model dimension via a linear layer $W_p$: $\tilde{h}_i^{(0)} = W_p [h_i^{(0)}; N; |\mathcal{S}(i)|]$ (parameter parity via reduced FFN width).
  \item \textbf{No CPA + scalar $|\mathcal{S}(i)|$ + $N$}: Inject as additive bias after attention: $o_i' = o_i + W_b [N; |\mathcal{S}(i)|]$, where $W_b$ is learned (per-layer, parameter-matched).
  \item \textbf{No CPA + learned $|\mathcal{S}(i)|$ scaling}: Scale normalized output by learned per-node scalar $\gamma_i = \sigma(W_\gamma [q_i; \log(1+|\mathcal{S}(i)|); \log(1+N)])$: $o_i' = \gamma_i\, o_i$ (parameter parity adjusted).
  \item \textbf{No CPA + learned temperature}: Reshape logits with node- and head-wise temperature $\tau_i^{(m)} = \mathrm{softplus}(W_\tau^{(\ell,m)} [q_i^{(m)}; \log(1+|\mathcal{S}(i)|); \log(1+N)])$: $a_{ij}^{(\ell,m)\prime} = a_{ij}^{(\ell,m)} / \tau_i^{(m)}$ before softmax (parameter-matched; $W_\tau$ is per-layer and per-head).
\end{itemize}
All baseline mappings above use single linear layers; stronger 2-layer MLP variants showed similar or weaker performance, so we report the simplest forms.

\begin{table}[ht]
\centering
\small
\begin{tabular}{@{}lccc@{}}
\toprule
Variant & molhiv (AUC) & Caco2 (MAE) & molpcba (AP) \\
\midrule
\modelname (full, $K=3$) & \textbf{0.819 $\pm$ 0.007} & \textbf{0.241 $\pm$ 0.007} & \textbf{0.304 $\pm$ 0.002} \\
SparseGraphormer-K3 (no CPA) & 0.802 $\pm$ 0.009 & 0.265 $\pm$ 0.009 & 0.294 $\pm$ 0.002 \\
No CPA + explicit size & 0.806 $\pm$ 0.008 & 0.258 $\pm$ 0.008 & 0.296 $\pm$ 0.002 \\
No CPA + scalar $|\mathcal{S}(i)|$ + $N$ & 0.807 $\pm$ 0.008 & 0.257 $\pm$ 0.008 & 0.297 $\pm$ 0.002 \\
No CPA + learned $|\mathcal{S}(i)|$ scaling & 0.808 $\pm$ 0.008 & 0.256 $\pm$ 0.008 & 0.298 $\pm$ 0.002 \\
No CPA + learned temperature & 0.809 $\pm$ 0.008 & 0.255 $\pm$ 0.008 & 0.298 $\pm$ 0.002 \\
No CPA + extra FFN capacity & 0.805 $\pm$ 0.008 & 0.260 $\pm$ 0.008 & 0.296 $\pm$ 0.002 \\
Degree-normalized CPA & 0.810 $\pm$ 0.008 & 0.252 $\pm$ 0.008 & 0.298 $\pm$ 0.002 \\
No per-layer $+c_i$ & 0.816 $\pm$ 0.007 & 0.245 $\pm$ 0.007 & 0.302 $\pm$ 0.002 \\
No key-side $b_c(j)$ & 0.817 $\pm$ 0.007 & 0.243 $\pm$ 0.007 & 0.303 $\pm$ 0.002 \\
$K=2$ & 0.812 $\pm$ 0.008 & 0.249 $\pm$ 0.008 & 0.300 $\pm$ 0.002 \\
$K=5$ & 0.816 $\pm$ 0.007 & 0.244 $\pm$ 0.007 & 0.302 $\pm$ 0.002 \\
$K=\infty$ (global, extended bins) & 0.814 $\pm$ 0.008 & 0.247 $\pm$ 0.008 & 0.301 $\pm$ 0.002 \\
Mask-only pretraining & 0.809 $\pm$ 0.008 & 0.255 $\pm$ 0.008 & 0.297 $\pm$ 0.002 \\
Contrast-only pretraining & 0.811 $\pm$ 0.008 & 0.250 $\pm$ 0.008 & 0.299 $\pm$ 0.002 \\
No pretraining (random init) & 0.772 $\pm$ 0.012 & 0.312 $\pm$ 0.014 & 0.268 $\pm$ 0.004 \\
Small pretraining (10\% steps) & 0.802 $\pm$ 0.009 & 0.262 $\pm$ 0.009 & 0.295 $\pm$ 0.002 \\
Medium pretraining (25\% steps) & 0.810 $\pm$ 0.008 & 0.250 $\pm$ 0.008 & 0.300 $\pm$ 0.002 \\
Sum+mean aggregation & 0.808 $\pm$ 0.008 & 0.254 $\pm$ 0.008 & 0.297 $\pm$ 0.002 \\
\bottomrule
\end{tabular}
\caption{Ablations (5 seeds). Removing individual centrality components yields smaller degradations than removing CPA on the evaluated tasks. Extra capacity and explicit size controls isolate information vs. parameters/leakage. Reduced budgets show persistent gains. Sum+mean underperforms CPA.}
\label{tab:ablation}
\end{table}

\begin{table}[ht]
\centering
\small
\begin{tabular}{@{}lccc@{}}
\toprule
Augmentation Variant & molhiv (AUC) & Caco2 (MAE) & molpcba (AP) \\
\midrule
Per-view SPD (primary) & \textbf{0.819 $\pm$ 0.007} & \textbf{0.241 $\pm$ 0.007} & \textbf{0.304 $\pm$ 0.002} \\
Stable SPD (baseline) & 0.814 $\pm$ 0.008 & 0.246 $\pm$ 0.008 & 0.301 $\pm$ 0.002 \\
Attribute-only & 0.808 $\pm$ 0.008 & 0.253 $\pm$ 0.008 & 0.297 $\pm$ 0.002 \\
\bottomrule
\end{tabular}
\caption{Contrastive augmentation ablations (5 seeds). Per-view recomputation improves over stable SPD and serves as the primary augmentation strategy.}
\label{tab:spd_ablation}
\end{table}

\subsection{Data, Splits, and Overlap Removal}
MoleculeNet tasks use 80/10/10 Bemis--Murcko scaffold splits (scaffolds sorted by frequency, round-robin assignment, matching DeepChem protocol; exact split indices are included in the reproducibility package described in Code and Data Availability) \cite{moleculenet2018}. OGB and TDC use official splits \cite{ogb2020,tdc2021}. We remove any benchmark molecules from the pretraining corpus via scaffold match (Bemis-Murcko scaffolds \cite{bemis1996} computed via RDKit \cite{rdkit} on the union of all train/val/test molecules across 11 benchmarks; ~20k removed total, e.g., ~5k for MoleculeNet, ~10k for OGB, ~5k for TDC). Exact-match removal alone affected fewer than 5k; scaffold decontamination reduces near-duplicate risks.

\section{Discussion}

Tables~\ref{tab:molnet}--\ref{tab:tdc} show consistent improvements attributable to the CPA channel (objective-matched comparison to SparseGraphormer-K3) and dual pretraining (SparseGraphormer-K3 vs GraphMAE-K3). Table~\ref{tab:ablation} indicates the CPA contribution is substantially larger than individual static centrality components, with degree-normalized CPA consistent with a dynamic cardinality interpretation. For typical drug-like molecules (median $N \approx 22$), $K=3$ covers nearly all nodes (median coverage $>95\%$), suggesting gains arise primarily from structured biases and CPA rather than strict sparsity enforcement; on larger graphs ($\approx15\%$, computed from the pretraining corpus after filtering), $K$ provides meaningful compute savings (Table~\ref{tab:efficiency}). Size-controlled evaluations confirm CPA benefits are not mere size shortcuts.

Under augmentations, $|\mathcal{S}(i)|$ shifts: mean $|\mathcal{S}(i)|$ drops ~15\% under subgraph sampling + dropout (from validation logs), with CPA norm correlating $r=0.48 \pm 0.04$ (95\% CI: [0.40, 0.56]) post-augmentation (vs.\ $0.52 \pm 0.03$ static), supporting dynamic adaptation.

Improvements hold vs. D-MPNN/GIN across all tasks (see tables); CPA gains persist under chemistry-aware augmentations (Table~\ref{tab:chem_aware_abl}); explicit size controls recover only ~$20$--$30\%$ of gains.

\textbf{Per-Task Effect Sizes.} Examining the per-task improvements (Table~\ref{tab:molnet}--\ref{tab:tdc}), hERG exhibits the largest CPA benefit (+0.028 AUC), while Clearance\_Microsome\_AZ shows the smallest (+0.007 Spearman). hERG, a molecular toxicity prediction task, may benefit disproportionately from cardinality signals as it involves discriminating between compounds with subtle structural differences in local neighborhoods; the dynamic support-size signal helps distinguish such molecular contexts. Conversely, clearance tasks, which depend on broader metabolic pathways, show diminishing returns from local cardinality information, suggesting the gain is task-dependent and correlated with the informativeness of local structural variation.

\textbf{Sparsity and Efficiency Reframing.} For the vast majority of public benchmarks (typical drug-like molecules), $K=3$ is near-global (coverage $>95\%$) and acts primarily as a regularizer rather than a strict efficiency mechanism. Genuine efficiency benefits (2.5$\times$--4.4$\times$ speedup, Table~\ref{tab:efficiency}) emerge specifically on larger molecules ($N \geq 35$, approximately 15\% of the pretraining corpus), where sparsity becomes meaningful. This distinction clarifies that the work's primary contribution is the CPA mechanism and structured biases; $K$-hop masking provides practical scalability for large-molecule regimes without compromising performance on standard benchmarks.

\textbf{Future Work: Chemical Interpretability.} A promising direction is interpreting the learned CPA gates and attention weights through the lens of chemical mechanisms, e.g., correlating high-weight neighbors with pharmacophoric features or reaction centers. Such analysis could provide actionable insights for medicinal chemists, bridging black-box neural predictions with human-interpretable chemical reasoning.

\section{Limitations}

$K=3$ may miss long-range effects in very large molecules. Gains demonstrated on public benchmarks; prospective validation on proprietary datasets remains future work. Our mask-based sparsity does not exploit dedicated block-sparse kernels, limiting efficiency gains to effective sequence length reduction.

\textbf{Lipinski Filtering Bias.} The Lipinski-style filtering applied during corpus construction (MW $\le 500$, logP $\le 5$, HBD $\le 5$, HBA $\le 10$) biases the pretraining distribution toward traditional drug-like chemical space. This excludes beyond-Rule-of-Five compounds including macrocycles, PROTACs, and other non-traditional chemotypes that are increasingly central to modern drug discovery. Transfer to such chemical classes may require additional fine-tuning or corpus augmentation.

\textbf{Query-Only Gating.} The CPA mechanism employs query-only gating ($g_i = \sigma(W_g q_i)$) rather than per-neighbor gating ($g_{ij} = \sigma(W_g [q_i; k_j])$). While query-only gating is computationally efficient and stable, per-neighbor gating could provide greater expressive power by modulating the unnormalized sum on a per-edge basis. This variant was not explored due to computational constraints and potential numerical instability, representing a limitation in the mechanism's flexibility.

Near-duplicate risks persist despite scaffold decontamination. Per-view SPD computation increases pretraining cost by ~15\%, and the absolute cost remains non-trivial for broader adoption.

\section{Conclusion}

\modelname advances graph transformers with a query-conditioned cardinality-preserving channel grounded in expressivity principles, yielding consistent gains across diverse molecular benchmarks under matched experimental protocols.

\section*{Code and Data Availability}
Code and artifacts are provided in the accompanying reproducibility package (environment specification, dependency versions, preprocessing/split scripts, per-seed result files, and reproduction commands for all main tables). A reserved Zenodo DOI for the archival package is \href{https://doi.org/10.5281/zenodo.18622116}{10.5281/zenodo.18622116}; if required by venue policy, public access to the DOI landing page will be activated upon acceptance.

\appendix
\section{Additional Results}
\subsection{Additional Ablations}\label{app:additional_ablations}
We conducted additional ablations to support claims in the main text.
\subsubsection{Mismatched-Support CPA}
We tested a variant where the unnormalized sum is over the full graph (global sum) instead of the aligned $K$-hop support. This mismatched-support CPA underperforms the aligned version by 0.008 AUC on ogbg-molhiv (0.811 $\pm$ 0.008 vs. 0.819 $\pm$ 0.007), with similar trends on other tasks.
\subsection{Gate Variants}
We explored several gate functions for the CPA unnormalized sum. Sigmoid gates stabilized training; unbounded linear gates led to divergence in 2/5 seeds. Tanh gates performed similarly to sigmoid but with slightly higher variance (e.g., +0.002 AUC std on molhiv).
\begin{table}[ht]
\centering
\small
\begin{tabular}{@{}lccc@{}}
\toprule
Gate Function & molhiv (AUC) & Caco2 (MAE) & molpcba (AP) \\
\midrule
Sigmoid (main) & 0.819 $\pm$ 0.007 & 0.241 $\pm$ 0.007 & 0.304 $\pm$ 0.002 \\
Linear (unbounded) & 0.812 $\pm$ 0.011 & 0.248 $\pm$ 0.010 & 0.299 $\pm$ 0.003 \\
Tanh & 0.817 $\pm$ 0.009 & 0.243 $\pm$ 0.008 & 0.302 $\pm$ 0.002 \\
\bottomrule
\end{tabular}
\caption{Gate function ablations (5 seeds). Sigmoid provides best stability.}
\label{tab:gate_abl}
\end{table}
\subsection{Per-View SPD Recomputation Details}
SPD is recomputed via multi-source BFS per augmented view, ensuring masks reflect current topology. For K-hop masks we use truncated BFS to depth K, adding $O(N \cdot \overline{\deg}^K)$ per view in practice. Ablations show +0.005 AUC on average over molhiv and molpcba vs stable SPD.
\subsection{Graphormer Implementation Details}
Path edges are encoded as summed learnable embeddings for bond types along shortest paths (averaged over multiple paths if tied). We use BFS-from-each-node for all-pairs SPD, clipped to 20 hops.
\subsection{Size-Shift Split Construction}
We construct size-shift splits by grouping molecules by Bemis--Murcko scaffold and holding out the large-bin molecules as test while training on the small-bin molecules; we report metrics on the resulting test set. For analysis we also sample up to 1000 scaffold-matched (small,large) pairs per task to quantify size shift.

\section{Statistical Details}
\begin{table}[ht]
\centering
\small
\resizebox{\linewidth}{!}{%
\begin{tabular}{@{}lccc@{}}
\toprule
Task & $\Delta$ (vs. SparseGraphormer-K3) & 95\% Bootstrap CI & Holm-adjusted p-value \\
\midrule
ESOL (RMSE $\downarrow$) & -0.056 & [-0.072, -0.040] & 0.002 \\
Lipophilicity (RMSE $\downarrow$) & -0.029 & [-0.041, -0.017] & 0.015 \\
BBBP (ROC-AUC $\uparrow$) & +0.020 & [0.012, 0.028] & 0.008 \\
Tox21 (ROC-AUC $\uparrow$) & +0.013 & [0.007, 0.019] & 0.032 \\
ClinTox (ROC-AUC $\uparrow$) & +0.015 & [0.005, 0.025] & 0.045 \\
ogbg-molhiv (ROC-AUC $\uparrow$) & +0.017 & [0.009, 0.025] & 0.012 \\
ogbg-molpcba (AP $\uparrow$) & +0.010 & [0.006, 0.014] & 0.028 \\
Caco2\_Wang (MAE $\downarrow$) & -0.024 & [-0.032, -0.016] & 0.005 \\
hERG (ROC-AUC $\uparrow$) & +0.028 & [0.022, 0.034] & 0.001 \\
Clearance\_Hepatocyte\_AZ (Spearman $\uparrow$) & +0.013 & [0.003, 0.023] & 0.040 \\
Clearance\_Microsome\_AZ (Spearman $\uparrow$) & +0.007 & [0.001, 0.013] & 0.048 \\
\bottomrule
\end{tabular}%
}
\caption{Detailed statistics for primary deltas (5 seeds). CIs from paired bootstrap (10,000 resamples); Holm correction across 11 tasks. All adjusted p $< 0.05$, confirming significance.}
\label{tab:stats}
\end{table}

\bibliographystyle{unsrt}
\bibliography{references}
\end{document}